\newtheorem{theorem}{Theorem}
\begin{document}

\title{A Decomposition Model for Stereo Matching}

\author{Chengtang Yao, Yunde Jia, Huijun Di\thanks{Corresponding author}, Pengxiang Li, Yuwei Wu\\
Beijing Laboratory of Intelligent Information Technology\\
School of Computer Science, Beijing Institute of Technology, Beijing, China\\
{\tt\small \{yao.c.t,jiayunde,ajon,lipengxiang,wuyuwei\}@bit.edu.cn}
}

\maketitle

\pagestyle{empty}  
\thispagestyle{empty} 

\begin{abstract}
In this paper, we present a decomposition model for stereo matching to solve the problem of excessive growth in computational cost (time and memory cost) as the resolution increases. In order to reduce the huge cost of stereo matching at the original resolution, our model only runs dense matching at a very low resolution and uses sparse matching at different higher resolutions to recover the disparity of lost details scale-by-scale. After the decomposition of stereo matching, our model iteratively fuses the sparse and dense disparity maps from adjacent scales with an occlusion-aware mask. A refinement network is also applied to improving the fusion result. Compared with high-performance methods like PSMNet and GANet, our method achieves $10-100\times$ speed increase while obtaining comparable disparity estimation results.
\end{abstract}

\section{Introduction}
Stereo matching aims to estimate the disparity from a pair of images. It has various downstream applications, such as 3D reconstruction, AR, autonomous driving, robot navigation, etc. Despite years of research on stereo matching, many state-of-the-art methods still face the problem of excessive growth in computational cost and memory consumption as the resolution increases. This problem limits the ability of existing methods to process high-resolution images, and restricts the use of stereo matching methods in practical situations with memory/speed constraints.
\begin{figure}[ht]
	\centering
	\includegraphics[width=.43\textwidth]{./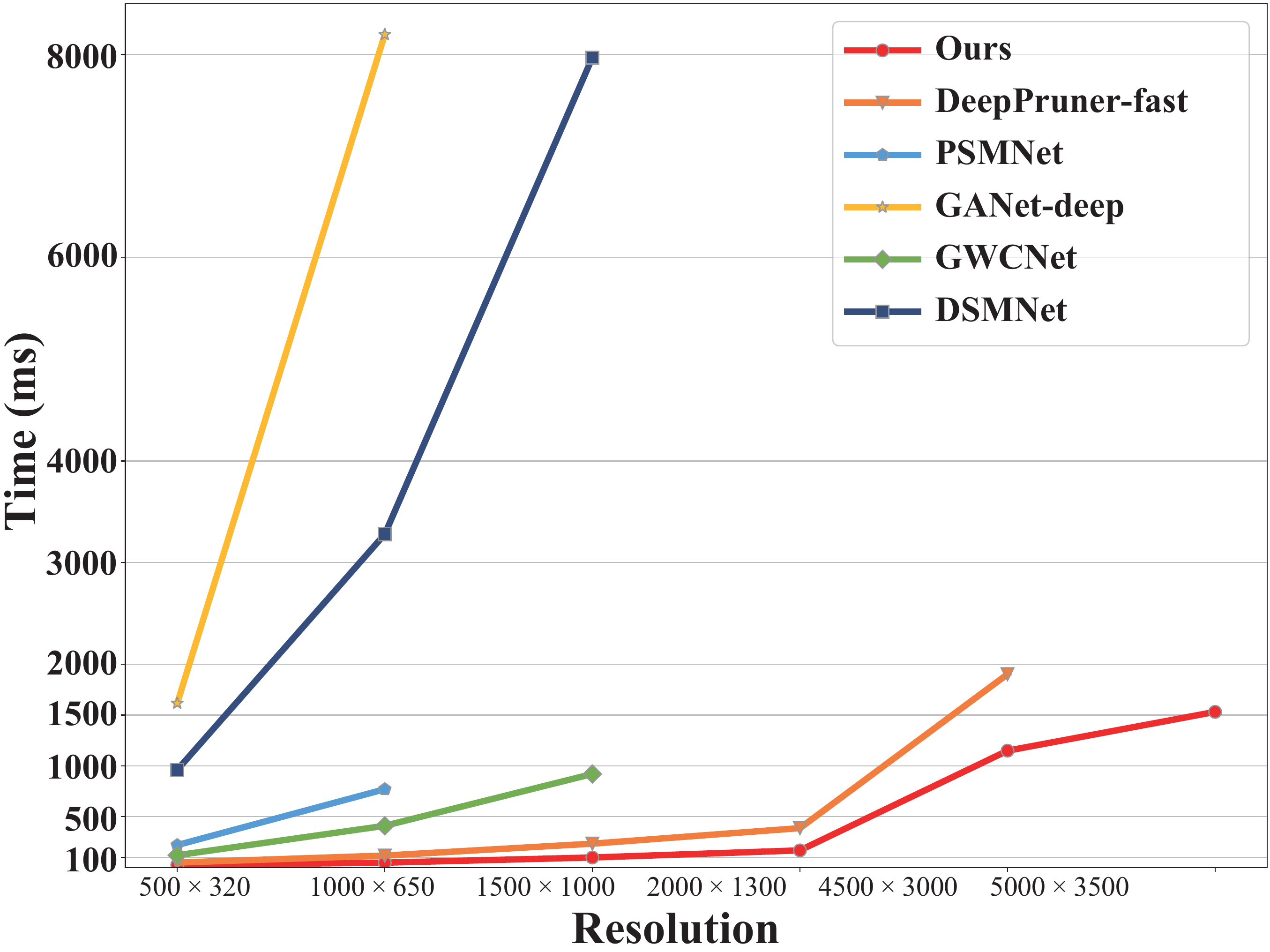}
	\centering
    \caption{As the resolution increases, the growth in time cost of state-of-the-art methods on one 1080 Ti GPU with 11GB of memory. The stopped growth of some curves is because the corresponding method cannot run at the expected resolution on the GPU. Compared to GANet \cite{zhang2019ga}, our model is 100 times faster. Compared to PSMNet \cite{chang2018pyramid}, our model achieves almost $15\times$ speed increase. Compared to DeepPruner \cite{duggal2019deeppruner}, our model achieves almost twice the running speed and lower memory consumption.}
    \vspace{-0.4cm}
    \label{Fig:speed comparison}
\end{figure}

In this paper, we propose a decomposition model for stereo matching. Compared with the excessive growth of many state-of-the-art methods, our model reduces the growth rate by several orders of magnitude as shown in Figure \ref{Fig:speed comparison}. The design of our model is inspired by the following two observations:

(1) It is not necessary to estimate the disparity of all pixels at the highest resolution, such as the disparities on the wall and the floor. As long as the content is not significantly lost during downsampling, the disparity of most areas can be efficiently estimated at low resolution, and then refined at high resolution.

(2) It only needs to consider the disparity estimation of some image details that are lost during downsampling. Fortunately, those lost details are sparse, and their stereo matching is also sparse (i.e., the lost details in the left image mostly only match the lost details in the right image). The sparse matching means less time and memory cost compared to the dense matching.

Based on the first observation, our model only runs dense matching at a very low resolution (such as $20 \times 36$, called reference resolution), ensuring disparity estimation for most regions that are not lost during downsampling. Based on the second observation, our model uses a series of sparse matching, each at a suitable higher resolution, to recover the disparity of lost details scale-by-scale. By decomposing the original stereo matching into a dense matching at the lowest resolution and a series of sparse matching at higher resolutions, the huge cost of original stereo matching can be significantly reduced.

The specific pipeline of our model is shown in Figure \ref{Fig:model}. Our model uses a full cost volume and a cost regularization for dense matching at the reference resolution. From the reference resolution, a series of operations are performed scale-by-scale, until the original input resolution is reached. These operations include four modules: detail loss detection, sparse matching, disparity upsampling, and disparity fusion. The corresponding implementation of the four modules are as follows: (1) In the detail loss detection module, the lost image details are learned unsupervised based on the square difference between deep features from adjacent scales. (2) In the sparse matching module, the sparse disparity map is estimated via cross-correlation and soft-max under the guidance of detected lost details. (3) In the disparity upsampling module, the estimated disparity map from the previous scale is upsampled to the resolution at the current scale via content-aware weights. (4) In the disparity fusion module, the results of the disparity upsampling module and sparse matching module are fused via an occlusion-aware soft mask. A refinement network is also used in this module to improve the fused disparity map.

We analyze the complexity of stereo matching in this paper. For convenience, we define the complexity of matching as the size of the search space \cite{lu2013patch}. We prove that the complexity of original dense stereo matching grows cubically as the input resolution increases, while the complexity of sparse matching in our model only grows logarithmically. In our model, the complexity of dense matching at the reference resolution is fixed, and independent of the input resolution. At the same time, the three other operations, i.e., detail loss detection, disparity upsampling, and disparity fusion, could be efficiently implemented.

In the experiment, we compare our model with state-of-the-art methods over the growth in computation cost and memory consumption. The results show that our model reduces the growth rate by several orders of magnitude. We also compare our model with state-of-the-art methods on Scene Flow dataset \cite{mayer2016large}, KITTI 2015 dataset \cite{menze2015object,Menze2015ISA,Menze2018JPRS}, and Middlebury dataset \cite{scharstein2014high}. The results show that our model is comparable to or even better than state-of-the-art methods with much faster running time and much lower memory cost.

\section{Related Work}
Stereo Matching has been studied for decades \cite{wheatstone1838xviii,marr1979computational,scharstein2002taxonomy}. In the early stage, researchers focus on the analysis of binocular vision and the building of its computation framework \cite{wheatstone1838xviii,ogle1950researches,julesz1971foundations,marr1979computational,von1990binocular}. Later, a series of traditional stereo matching approaches are proposed to improve the framework, including local model \cite{birchfield1999depth,hirschmuller2002real,yoon2006adaptive}, global model \cite{boykov2001fast,kolmogorov2001computing,sun2003stereo} and semi-global model \cite{hirschmuller2008stereo,sinha2014efficient}. Recently, deep-learning-based approaches have emerged and play the most important role in stereo matching \cite{zbontar2015computing,mayer2016large,kendall2017end,chang2018pyramid}. Although both traditional methods and deep learning methods have achieved great performance, they still suffer from the problem of excessive growth in computation cost as the resolution increases.

\textbf{Traditional Methods.}  In order to solve the problem, researchers propose many approaches to reduce the size of search space, by either improving the operations in dense matching, or turning for the sparse-to-dense methods. In the first perspective, researchers propose to reduce the complexity of cost aggregation in terms of the size of image \cite{min2008cost,falkenhagen1997hierarchical}, matching window \cite{richardt2010real}, or disparity space \cite{min2011revisit,falkenhagen1997hierarchical}. The semi-global matching (SGM) \cite{hirschmuller2005accurate} is also proposed to approximate the global energy function with pathwise optimizations from all directions. Different from them improving the operations on dense matching, we replace the dense matching at high resolution with sparse matching to reduce the complexity. In the second perspective, researchers mainly focus on the sparse nature in stereo matching \cite{geiger2010efficient,bleyer2011patchmatch,he2012computing,lu2013patch,sinha2014efficient}. They propose to compute the sparse disparity map over the extracted key points, and then infer the dense disparity map based on the sparse result, like efficient large-scale stereo \cite{geiger2010efficient} and local plane sweeps \cite{sinha2014efficient}. PatchMatch-based methods \cite{bleyer2011patchmatch,he2012computing,lu2013patch} are similar to them but mostly generate the sparse result via random initialization. They are based on the assumption that at least one pixel of the areas is initialized with a label close to the ground truth. Different from them only using sparse matching with local information, we use dense matching at the lowest resolution to provide global information and use sparse matching to recover the local details at high resolution.

\textbf{Deep Methods} In deep learning methods, researchers try to solve the excessive growth in computation cost and memory consumption from two aspects, light-weight network, and improved the computaion with cost volume. In the first aspect, most researchers replace the expensive operations on cost volume with a specially designed module \cite{zhang2019ga,yao2019recurrent,xu2020aanet,liu2020novel}. Some others also improve the whole architecture of network to achieve less computation cost \cite{khamis2018stereonet,yang2019hierarchical,tonioni2019real}. However, the above methods ignore the influence of cost volume. The cost volume will result in cubical growth of computation cost and memory consumption when resolution increasing. In order to further resolve the problem, researchers propose to improve the computaion with cost volume based on the assumption where most content of the cost volume is redundant. They mainly improve the computaion with cost volume via narrowing the disparity space based on the initial estimation and then upsample the result from coarse to fine \cite{yin2019hierarchical,duggal2019deeppruner,chen2019point,yu2020fast,cheng2020deep,gu2020cascade}. Among them, DeepPruner \cite{duggal2019deeppruner} achieves great performance. They use the minimum and maximum disparity regressed by CNN to sample fixed size candidate disparities for the matching at a higher resolution. They also design a light-weight architecture for more efficient cost aggregation and regression. However, they ignore the computation of details at high resolution which are lost in low-resolution matching and are difficult to be recovered with a small size of sampling. Different from the above coarse-to-fine methods, our model is coarse+fine. Instead of only depending on the initial estimation at coarse scale to generate the details at fine scale, we preserve the details in the decomposition of stereo matching. Thus, our model could perform the dense matching at a very low resolution to reduce the search space size but without significant information loss. Furthermore, we resolve the problem from a new view and propose a new pipeline for stereo matching where all the above methods could be integrated into each step in our model to build a more powerful stereo matching, which is a good supplement to the current research mainstream.

\begin{figure*}[ht]
	\centering
	\includegraphics[width=.88\textwidth]{./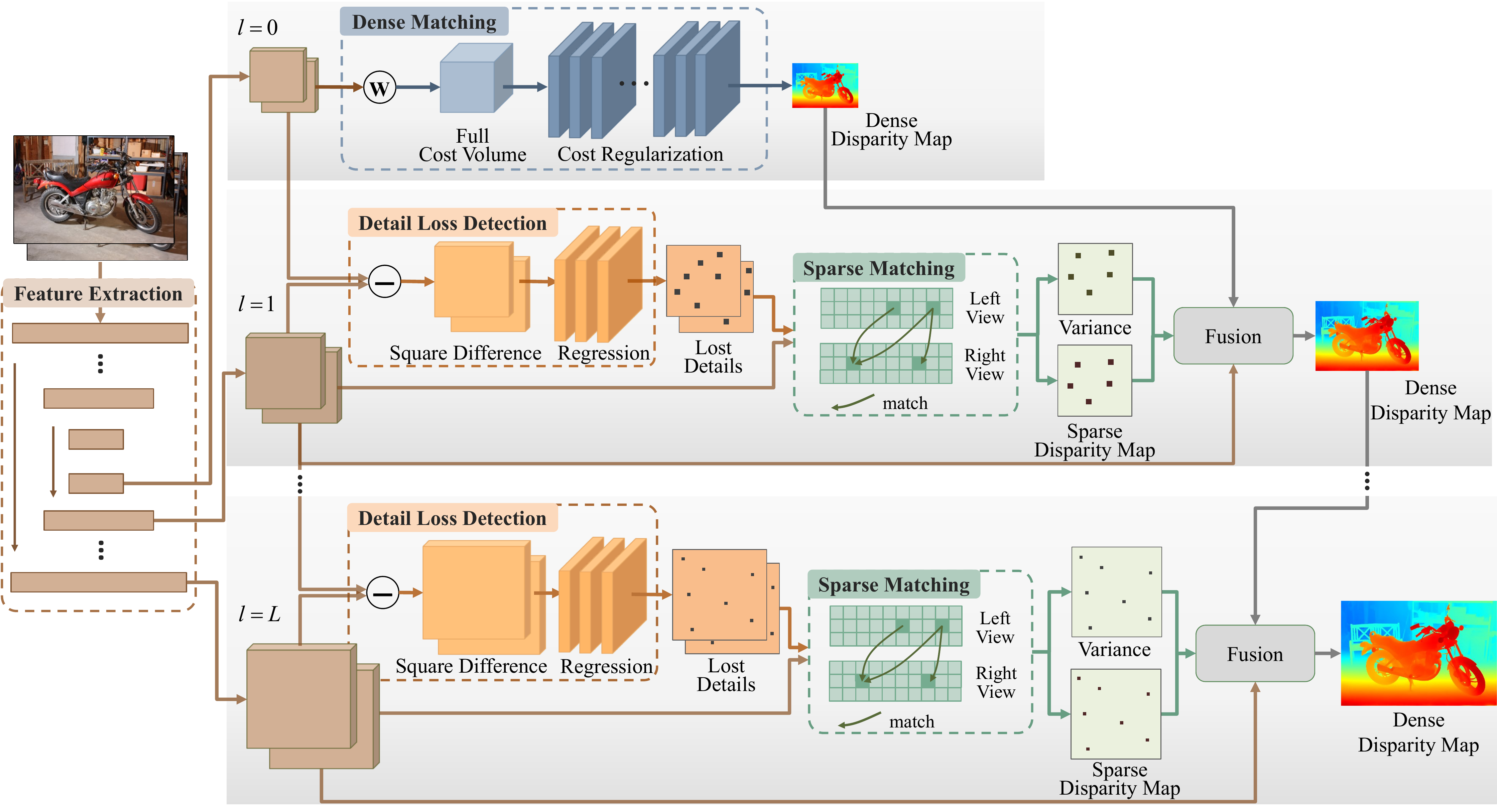}
	\centering
    \caption{Overview of our model. Given a pair of images, we first extract the feature map for the matching at level $l$. We then run the dense matching at the lowest resolution $l=0$ and use sparse matching at different higher resolutions $l \geq 1$. For the computed dense and sparse disparity maps, we fuse them hierarchically to recover the dense disparity map at the original resolution $l=L$. The fusion is composed of disparity upsampling, disparity fusion, and refinement. \textcircled{w} is the warping operation between left and right view. \textcircled{-} represents the computation of the square difference between feature maps from adjacent levels. For details please refer to our method part.}
    \label{Fig:model}
\end{figure*}

\section{Method}
\subsection{Multi-Scale Stereo Matching}
Stereo matching is a dense correspondence problem. It is typically modeled as an exhaustive search process between different areas from left and right views with multi-scale analysis which is used to reduce the ambiguity caused by ill-posed areas. In order to better model this process, we recognize the image as a set of areas $\{A_{l}\}_{l=0}^{l=L}$ where $A_{l}$ represents all areas at each scale/level $l$. The exhausting search process is then modeled as
\begin{equation}
\begin{aligned}
    D_{L} &= \mathcal{F}(\acute{A}_{L}, \grave{A}_{L}, D_{L-1}), \\
    &\hdots \\
    D_{1} &= \mathcal{F}(\acute{A}_{1}, \grave{A}_{1}, D_{0}), \\
    D_{0} &= \mathcal{F}(\acute{A}_{0}, \grave{A}_{0}), \\
    \tilde{D} &= \phi(D_{L},\cdots,D_{0}).
\end{aligned}
\label{EQ:exhaustive search process}
\end{equation}
$\acute{A}_{l}$ and $\grave{A}_{l}$ represent the areas of image from left and right views respectively. $D_{l}$ is the dense disparity maps estimated at level $l$ or is the cost volume taken as the input to the next level $l+1$. $\mathcal{F}(\cdot)$ represents the full matching operation. $\mathcal{F}(\cdot, D_{l})$ represents the full matching operation based on $D_l$. Some methods also contain $\phi(\cdot)$ which represents the fusion of dense disparity maps at different levels.

The $\mathcal{F}(\cdot)$, however, has a high complexity. Given $A_{l}$ with resolution of $\mathrm{H}_{l} \times \mathrm{W}_{l}$ and disparity space size of $\mathrm{D}_{l}$, we define the complexity $\mathrm{O}$ of $\mathcal{F}(\cdot)$ at level $l$ as the size of search space :
\begin{equation}
\begin{aligned}
    \mathrm{O}_{l} &= \mathrm{W}_{l}  \mathrm{H}_{l}  \mathrm{D}_{l}.
\end{aligned}
\label{EQ:complexity of dense matching}
\end{equation}
Then, the whole complexity of exhaustive search process is
\begin{equation}
\begin{aligned}
    \mathrm{O} = \sum_{l=0}^{l=L} \mathrm{O}_{l},
\end{aligned}
\label{EQ:complexity}
\end{equation}
After rewriting Eq. \ref{EQ:complexity}, we obtain the following theorem:
\begin{theorem}
Supposing $s \in \{2,3,\cdots\}$ is the size of upsampling ratio between adjacent levels, $1 < C \leq 8/7$ is a constant value and $\mathcal{O}(\cdot)$ represents the tight upper bound, then the complexity $\mathrm{O}$ of exhaustive search process is
\begin{equation}
\begin{aligned}
    \mathrm{O} &= \mathrm{W}_{0}  \mathrm{H}_{0}  \mathrm{D}_{0} \ \mathcal{O}(s^{3L}C).
\end{aligned}
\label{EQ:complexity of exhaustive search}
\end{equation}
\label{theorem:complexity of exhaustive search}
\vspace{-0.5cm}
\end{theorem}
The theorem \ref{theorem:complexity of exhaustive search} reveals the cubic growth of complexity when using exhaustive search on high resolution images. For the specific proof, please refer to the supplementary materials.

\subsection{Decomposition Model}
As aforementioned, as long as the content is not lost significantly during downsampling, the disparity of most areas, called coarse-grained areas by us, can be estimated efficiently at low resolution and then refined at high resolution. For the image details that are lost during downsampling, we call them fine-grained areas and their disparities should be estimated at high resolution. Therefore, the image areas $A_{l}$ at level $l$ is decomposed as
\begin{equation}
\begin{aligned}
    A_{l} &= \mathrm{CA}_{l} \cup \mathrm{FA}_{l}, \\
    \varnothing &= \mathrm{CA}_{l} \cap \mathrm{FA}_{l},
\end{aligned}
\label{EQ:areas composition}
\end{equation}
where $\mathrm{CA}_{l}$ and $\mathrm{FA}_{l}$ represents the coarse-grained areas and fine-grained areas at level $l$ respectively.

As the stereo matching on coarse-grained areas and fine-grained areas are suitable to be carried out at low and high resolution respectively, we decompose the original stereo matching into a full matching at the lowest level and a series of sparse matching at the rest levels, as shown in Figure \ref{Fig:model}. Our model can be formulated as
\begin{equation}
\begin{aligned}
    \hat{D}_{L} &= \widehat{\mathcal{F}}(\acute{\mathrm{FA}}_{L}, \grave{\mathrm{FA}}_{L}), \\
    &\vdots \\
    \hat{D}_{1} &= \widehat{\mathcal{F}}(\acute{\mathrm{FA}}_{1}, \grave{\mathrm{FA}}_{1}), \\
    D_{0} &= \mathcal{F}(\acute{\mathrm{A}}_{0}, \grave{\mathrm{A}}_{0}),
\end{aligned}
\label{EQ:decomposition form of stereo matching}
\end{equation}
\begin{equation}
\begin{aligned}
    \tilde{D} &= \hat{D}_{L} \cup \cdots \cup \hat{D}_{1} \cup D_{0},
\end{aligned}
\label{EQ:decomposition form of disparity map}
\end{equation}
where $\widehat{\mathcal{F}}(\cdot)$ represents the sparse matching operation, $\hat{D}_{l}$ is the sparse disparity map estimated at level $l$, and $\cup$ means the disparity fusion after the disparity upsampling. The fine-grained areas $\mathrm{FA}_{l}$ will be detected by the detail loss detection module in our model.

The complexity $\hat{\mathrm{O}}$ of $\widehat{\mathcal{F}}(\cdot)$ is calculated as
\begin{equation}
\begin{aligned}
    \hat{\mathrm{O}}_{l} &= \mathrm{W}_{l}  \mathrm{H}_{l}  \mathrm{D}_{l} \  r_{spa,l} \  r_{dis,l},
\end{aligned}
\label{EQ:complexity of sparse matching-raw}
\end{equation}
where $r_{spa,l}$ is the percentage of the sparse details in the left view that are lost during downsampling, and $r_{dis,l}$ is the percentage of the sparse disparity search space relative to the size $\mathrm{D}_{l}$ of the full disparity search space in right view. Without loss of generality, we use $r_{spa,l}$ to approximate $r_{dis,l}$. It is because the stereo matching of the details in the left view is only searched on the details in the right view. And $r_{spa,l}$ represents the percentage of detail pixels in a row, which can approximate the average percentage $r_{dis,l}$ of the pixels to be searched in the full disparity search range. Therefore, the complexity $\hat{\mathrm{O}}$ of $\widehat{\mathcal{F}}(\cdot)$ can be rewritten as
\begin{equation}
\begin{aligned}
    \hat{\mathrm{O}}_{l} &= \mathrm{W}_{l}  \mathrm{H}_{l}  \mathrm{D}_{l} \  r_{l}^2, \\
    r_{l} &= r_{spa,l}.
\end{aligned}
\label{EQ:complexity of sparse matching}
\end{equation}

After rewriting Eq. \ref{EQ:complexity} with Eq. \ref{EQ:complexity of sparse matching} and Eq. \ref{EQ:complexity of dense matching}, we obtain the following theorem:
\begin{theorem}
Supposing $s \in \{2,3,\cdots\}$ is the size of upsampling ratio between adjacent levels, $C$ is a constant value and $\mathcal{O}(\cdot)$ represents the tight upper bound, then the complexity $\hat{\mathrm{O}}$ in our model is
\begin{equation}
\begin{aligned}
    \hat{\mathrm{O}} &= \mathrm{W}_{0}  \mathrm{H}_{0} \mathrm{D}_{0} \ \mathcal{O}(LC), \\
    when &\  r_{l} \leq \sqrt{C/s^{3l}}.
\end{aligned}
\vspace{-0.5cm}
\label{EQ:complexity hierarchical decomposition model}
\end{equation}
\label{theorem:complexity of our model}
\end{theorem}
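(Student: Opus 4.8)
The plan is to reduce the theorem to a single finite sum and then kill its growth term-by-term using the stated bound on the detail-loss ratios. First I would fix the scaling relations induced by the upsampling ratio $s$: since each level multiplies the resolution by $s$ in both spatial dimensions and the disparity search range scales linearly with resolution, one has $\mathrm{W}_l = s^l \mathrm{W}_0$, $\mathrm{H}_l = s^l \mathrm{H}_0$, and $\mathrm{D}_l = s^l \mathrm{D}_0$. Substituting these into Eq.~\ref{EQ:complexity of dense matching} gives the compact identity $\mathrm{W}_l \mathrm{H}_l \mathrm{D}_l = s^{3l}\,\mathrm{W}_0 \mathrm{H}_0 \mathrm{D}_0$, which isolates the cubic per-level growth factor $s^{3l}$ that drove Theorem~\ref{theorem:complexity of exhaustive search}.

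Next I would assemble the total cost from the decomposition in Eq.~\ref{EQ:decomposition form of stereo matching}. The model runs one dense matching at level $0$ and sparse matchings at levels $1,\dots,L$, so summing Eq.~\ref{EQ:complexity} with the dense cost $\mathrm{O}_0$ and the sparse cost $\hat{\mathrm{O}}_l$ from Eq.~\ref{EQ:complexity of sparse matching} yields
\[
\hat{\mathrm{O}} = \mathrm{O}_0 + \sum_{l=1}^{L} \hat{\mathrm{O}}_l = \mathrm{W}_0 \mathrm{H}_0 \mathrm{D}_0 \Bigl( 1 + \sum_{l=1}^{L} s^{3l} r_l^2 \Bigr).
\]
Here the level-$0$ term contributes the constant $1$ (there is no detail-loss factor, since that matching is dense), while each higher level contributes its cubic factor $s^{3l}$ weighted by $r_l^2$.

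The decisive step is to control this weighted sum with the hypothesis $r_l \le \sqrt{C/s^{3l}}$, equivalently $r_l^2 \le C/s^{3l}$. This is exactly the bound that cancels the cubic growth: every summand then satisfies $s^{3l} r_l^2 \le C$, so the sum over the $L$ sparse levels is at most $LC$ and hence $\hat{\mathrm{O}} \le \mathrm{W}_0 \mathrm{H}_0 \mathrm{D}_0 (1 + LC)$. Since $1 + LC = \mathcal{O}(LC)$ as $L$ grows for fixed $C>0$, the tight upper bound is $\mathcal{O}(LC)$, which is Eq.~\ref{EQ:complexity hierarchical decomposition model}. I would close by contrasting this with Theorem~\ref{theorem:complexity of exhaustive search}: writing $L = \log_s(\mathrm{W}_L/\mathrm{W}_0)$ shows $\mathcal{O}(LC)$ is logarithmic in the input resolution, versus the cubic $\mathcal{O}(s^{3L}C)$ of exhaustive search.

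The main obstacle is not the algebra — once the scaling relations and the hypothesis are in place, the conclusion is a one-line geometric bound — but rather interpreting and justifying the hypothesis $r_l \le \sqrt{C/s^{3l}}$ as a realistic modeling assumption. It asserts that the fraction of newly lost detail pixels must shrink at least geometrically, at rate $s^{-3l/2}$, as the level rises. I would therefore want to argue that this decay holds because the content lost during downsampling becomes progressively sparser at finer scales, and to state explicitly that the theorem is a \emph{conditional} result contingent on this decay rather than an unconditional bound.
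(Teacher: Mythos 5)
Your proof is correct and follows essentially the same route the paper intends: substitute the scaling relation $\mathrm{W}_l\mathrm{H}_l\mathrm{D}_l = s^{3l}\,\mathrm{W}_0\mathrm{H}_0\mathrm{D}_0$ into Eq.~\ref{EQ:complexity of sparse matching}, sum over levels as in Eq.~\ref{EQ:complexity}, and use the hypothesis $r_l^2 \le C/s^{3l}$ to bound each summand by $C$, giving $\hat{\mathrm{O}} \le \mathrm{W}_0\mathrm{H}_0\mathrm{D}_0(1+LC) = \mathrm{W}_0\mathrm{H}_0\mathrm{D}_0\,\mathcal{O}(LC)$ (the paper only writes out the analogous geometric-sum argument for Theorem~\ref{theorem:complexity of exhaustive search} and leaves this one implicit). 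Your closing remarks on the conditional nature of the assumption and the logarithmic dependence $L = \log_s(\mathrm{W}_L/\mathrm{W}_0)$ match the paper's own discussion via Table~\ref{tab:ratio}.
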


We present a statistical analysis of data to show that the condition in theorem \ref{theorem:complexity of our model} is satisfiable in most time. Specifically, we make statistical analyses according to the content-aware results of our detail loss detection module (see Sec \ref{SEC:Detail Loss Detection}). As shown in Table \ref{tab:ratio}, almost all the data in Scene Flow dataset satisfy the condition $r_{l} \leq \sqrt{C/s^{3l}}$ in each level with a small value of $C$, which shows the condition of theorem \ref{theorem:complexity of our model} holds in most time. Compared to the exponetial growth of the complexity of exhaustive search process shown by the theorem \ref{theorem:complexity of exhaustive search}, the complexity of stereo matching in our model grows only linearly with the number of levels $L$.

\begin{table}[htbp]
  \centering
  \scalebox{.9}[.9]{%
    \begin{tabular}{|c|c|c|c|}
    \hline
    \diagbox{C}{Per(\%)}{Level} & 1     & 2     & \multicolumn{1}{c|}{3} \\
    \hline
    1     & 99.86 & 70.01 & 49.79 \\
    \hline
    2     & 100.00   & 99.27 & 92.02 \\
    \hline
    3     & 100.00   & 100.00   & 98.87 \\
    \hline
    4     & 100.00   & 100.00   & 99.74 \\
    \hline
    5     & 100.00   & 100.00   & 99.88 \\
    \hline
    6     & 100.00   & 100.00   & 100.00 \\
    \hline
    \end{tabular}%
  }
  \caption{The statistical result about the percentage of data in SceneFlow dataset that satisfy the condition $r_{l} \leq \sqrt{C/s^{3l}}$ in each level.}
  \label{tab:ratio}%
\end{table}%

\subsection{Implementation}
As shown in Figure \ref{Fig:model}, we first use U-Net \cite{ronneberger2015u} to obtain deep features $F_{l}$ on each level $l$ for the stereo matching. We then compute the dense disparity map $D_{0}$ based on dense matching at the lowest resolution $l=0$. We also estimate the sparse disparity map $\hat{D}_{l}$ with sparse matching under the guidance of the detected lost details. We fuse $D_{l-1}$ and $\hat{D}_{l}$ to compute the dense disparity map $D_{l}$ as the input to the next level or the output of our model.

\subsubsection{Decomposed Matching}
\label{SEC:Decompositional Stereo Matching}
\label{SEC:Dense Matching}
\textbf{Dense Matching.} At the lowest level, we follow the previous methods and build a full cost volume for disparity regression \cite{kendall2017end,chang2018pyramid}. It takes up little computation resources due to the negligible size of search space. We also use cost regularization to rectify the cost volume before the disparity regression with softmax. The cost regularization is composed of eight 3D convolutions all following a batch normalization layer. For the specific architecture please refer to the supplementary materials.

\label{SEC:Detail Loss Detection}
\textbf{Detail Loss Detection.} We formulate $FA_{l}$ as details that would disappear in the low level. We use a binary mask $M_{FA}$ to represents the positions of lost details, which is computed by a network $\mathcal{F}_{\rm{DLD}}$ based on the square difference between $F_{l}$ and upsampled $F'_{l-1}$:
\begin{equation}
\small
\begin{aligned}
    M_{FA_{l}} &= \mathcal{F}_{\rm{DLD}}((F_{l}-F'_{l-1})^2; \theta),
\end{aligned}
\label{EQ:detail loss detection}
\end{equation}
where $\theta$ is the parameter of network. The network is composed of three convolution operations and a sigmoid function, and its learning is guided by an unsupervised loss. The unsupervised loss $\mathcal{L}^{\rm{DLD}}_{l}$ is designed by maximizing the differences between $F_{l}$ and $F_{l-1}$ on $FA_{l}$ and forcing the sparsity of $FA_{l}$ :
\begin{equation}
\small
\begin{aligned}
    \mathcal{L}^{\rm{DLD}}_{l} &= \mid {FA}_{l} \mid - \alpha \frac{\sum_{(h,w) \in {FA}_{l}} \parallel F_{l}(h,w)-F'_{l-1}(h,w) \parallel_{2}}{\mid {FA}_{l} \mid}.
\end{aligned}
\label{EQ:unsupervised loss for detail loss detection}
\end{equation}
Benefiting from unsupervised learning, we do not need additional data annotation for training.

\textbf{Sparse Matching.} After obtaining the fine-grained areas, our focus turns to how to conduct the sparse matching on the extracted $\{ FA_{l} \}_{l=0}^{l=L}$. It is not suitable to use cost volume representation, as $FA_{l}$ is content-aware whose shape and size are dynamic but not fixed. Instead, we opt for a direct computation of the disparity map. Specifically, we compute the cost via cross-correlation:
\begin{equation}
\begin{aligned}
    C_{l}(h,w,d) &= <\acute{F}_{l}(h,w), \grave{F}_{l}(h,w-d)>,
\end{aligned}
\label{EQ:cost computation}
\end{equation}
where $\acute{F}_{l}$ and $\grave{F}_{l}$ are the deep features from left and right views respectively, $(h,w) \in \acute{FA}_{l}$ and $(h,w-d) \in \grave{FA}_{l}$. We then use softmax to get the probability distribution:
\begin{equation}
\begin{aligned}
    P_{l}(h,w,d) &= \frac{ \mathrm{e}^{C_{l}(h,w,d)-C_{l}^{max}(h,w)} }{ \sum_{d=0}\mathrm{e}^{C_{l}(h,w,d)-C_{l}^{max}(h,w)} }, \\
    C_{l}^{max}(h,w) &= \mathop{\max}_{d} \  C_{l}(h,w,d).
\end{aligned}
\label{EQ:probability distribution}
\end{equation}
And we regress the sparse disparity map over the computed probability distribution as
\begin{equation}
\begin{aligned}
    \hat{D}_{l}(h,w) &= \sum_{d=0} P_{l}(h,w,d) * d. \\
\end{aligned}
\label{EQ:disparity regression}
\end{equation}
Compared to the full matching, the sparse matching not only reduces the number of points on $\acute{A}_{l}$, but also eliminates a lot of redundant matching on $\grave{A}_{l}$. As for the specific equation of back propagation, please refer to the supplementary materials.

\begin{table*}[htbp]
  \centering
  \scalebox{.79}[.79]{%
    \begin{tabular}{ccc|ccc|ccc|c}
    \hline
    \multicolumn{3}{c|}{$\rm Fusion_{l=1}$} & \multicolumn{3}{c|}{$\rm Fusion_{l=2}$} & \multicolumn{3}{c|}{$\rm Fusion_{l=3}$} & \multirow{2}[4]{*}{EPE} \\
\cline{1-9}    \makecell[c]{dynamic \\ upsampling} & \makecell[c]{disparity \\ fusion} & \makecell[c]{refinement} & \makecell[c]{dynamic \\ upsampling} & \makecell[c]{disparity \\ fusion} & \makecell[c]{refinement} & \makecell[c]{dynamic \\ upsampling} & \makecell[c]{disparity \\ fusion} & \makecell[c]{refinement} &  \\
    \hline
    \hline
       -   &    -   &    -   &    -   &    -   &    -   &    -   &   -    &    -   & 2.104 \\
    \Checkmark &    -   &    -   &    -   &    -   &   -    &    -   &    -   &    -   & 2.086 \\
    \Checkmark & \Checkmark &   -    &   -    &    -   &    -   &    -   &   -    &   -    & 2.039 \\
    \Checkmark & \Checkmark & \Checkmark &   -    &    -   &   -    &    -   &   -    &    -   & 1.286 \\
    \hline
    \Checkmark & \Checkmark & \Checkmark & \Checkmark &   -    &     -  &    -   &    -   &    -   & 1.208 \\
    \Checkmark & \Checkmark & \Checkmark & \Checkmark & \Checkmark &    -   &    -   &   -    &    -   & 1.205 \\
    \Checkmark & \Checkmark & \Checkmark & \Checkmark & \Checkmark & \Checkmark &    -   &   -    &    -   & 0.954 \\
    \hline
    \Checkmark & \Checkmark & \Checkmark & \Checkmark & \Checkmark & \Checkmark & \Checkmark &   -    &   -    & 0.896 \\
    \Checkmark & \Checkmark & \Checkmark & \Checkmark & \Checkmark & \Checkmark & \Checkmark & \Checkmark &   -    & 0.893 \\
    \Checkmark & \Checkmark & \Checkmark & \Checkmark & \Checkmark & \Checkmark & \Checkmark & \Checkmark & \Checkmark & 0.842 \\
    \hline
    \end{tabular}%
  }
  \caption{The influence of each component in the Fusion step obatined on the Scene Flow dataset. $\mathrm{Fusion}_{l}$ represents the Fusion step at level $l$.}
  \label{tab:abalation study - arch}%
\end{table*}%

\begin{table}[htbp]
\centering
\scalebox{.8}[.8]{%
    \begin{tabular}{|c|c|c|c|c|}
    \hline
    \diagbox{Level}{EPE}{Type} & \makecell[c]{Dense \\ Result} & \makecell[c]{Fusion \\ Result} & \makecell[c]{Hard \\ Fusion} & \makecell[c]{Soft \\ Fusion} \\
    \hline
    1 & 3.919  & 3.173 &   3.087    &   2.039 \\
    \hline
    2 & 4.965  & 4.134 &   1.624    &   1.205 \\   
    \hline
    3 & 4.245  & 3.918 &   1.010    &   0.893 \\
    \hline
    \end{tabular}
}
  \caption{The illustration of upsampled dense result $D_{l}^{'}$ and fusion result $\bar{D}_{l}$ on the non-occluded fine-grained areas, and the result from hard fusion and soft fusion. Experiments are conducted on the Scene Flow dataset.}
  \label{tab:abalation study - dense&sparse hard&soft}%
\end{table}%

\subsubsection{Fusion}
\textbf{Disparity Upsampling.} After obtaining $D_{0}$ and $\{ \hat{D}_l \}_{l=1}^{l=L}$, we fuse them hierarchically to compute the final output disparity map at the highest level. At each level, we first upsample the dense disparity map from previous level $D_{l-1}$ to current level ${D}'_{l}$ in a content-aware fashion \cite{wang2019carafe}. The content-aware weights are learned through three convolution operations with the input of the left feature map $\acute{A}_{l}$ at the current level and the dense disparity map $D_{l-1}$ from the previous level.

\textbf{Disparity Fusion.} According to Eq. \ref{EQ:decomposition form of disparity map}, we could model the fusion process as the union of a collection of sets, like the superposition of multiple images, which we call as \emph{hard fusion}. However, such a hard fusion performs poorly in practice due to the influence of occlusion. The occlusion results in severe matching ambiguity in sparse matching, while fine-grained areas contain many edges that are easy to occur in occlusion. Therefore, we propose to fuse the sparse disparity map $\hat{D}_{l}$ and the upsampled dense disparity map ${D}'_{l}$ via a learned mask, which we call as \emph{soft fusion}. Specifically, we use a regression network to generate the soft mask $M_l$. The regression network contains three 2D convolution operations and a sigmoid activation function. The input of regression network is the concatenation of left features $\acute{F}_{l}$, upsampled dense disparity map ${D}'_{l}$, sparse disparity map $\hat{D}_{l}$, mask of current fine-grained areas $M_{\acute{FA}_{l}}$ and robustness mask of sparse matching $\hat{V}_{l}$. The pixel-wise robustness is formulated as the variance in sparse matching:
\begin{equation}
\begin{aligned}
    \hat{V}_{l}(h,w) &= \sum_{d=0} P_{l}(h,w,d) * (\hat{D}_{l}(h,w)-d)^2,
\end{aligned}
\label{EQ:variance in sparse matching}
\end{equation}
where $(h,w) \in \acute{FA}_{l}$ and $(h,w-d) \in \grave{FA}_{l}$. The generation of soft mask via regression network $\mathcal{F}_{\rm{REG}}$ is then formulated as
\begin{equation}
\begin{aligned}
    M_{l} = \mathcal{F}_{\rm{REG}}(cat(\acute{F}_{l},{D}'_{l},\hat{D}_{l},M_{\acute{FA}_{l}},\hat{V}_{l}); \  \theta).
\end{aligned}
\label{EQ:soft mask generation}
\end{equation}
$\theta$ is the parameter of regression network and $cat(\cdot)$ is the concatenation operation. After obtaining the soft mask $M_{l}$, we compute the soft fusion of dense and sparse disparity map as
\begin{equation}
\begin{aligned}
    \bar{D}_{l} &= {D}'_{l}(1-M_{l}) + \hat{D}_{l}M_{l}.
\end{aligned}
\label{EQ:variance in sparse matching}
\end{equation}

\textbf{Refinement.} We further propose a refinement network to improve the sub-pixel accuracy of current dense disparity map $\bar{D}_L$. Specifically, we use $\bar{D}_L$ to warp the right feature maps. Then, we concatenate the warped right feature maps $\grave{F}'_{l}$, left feature maps $\acute{F}_{l}$ and current disparity map $\bar{D}_{l}$ to feed into a refinement network $\mathcal{F}_{\rm{REF}}$ :
\begin{equation}
\begin{aligned}
    D_{l} = \bar{D}_{l} + \mathcal{F}_{\rm{REF}}(cat(\grave{F}'_{l},\acute{F}_{l},\bar{D}_{l}); \ \theta).
\end{aligned}
\label{EQ:refinement}
\end{equation}
$\theta$ is the parameter of the refinement network, The refinement network contains seven convolution operations. All convolutions are followed by a relu activation function and a batch normalization level, except the last convolution that is only followed by a batch normalization level.

\begin{figure*}[ht]
	\centering
	\includegraphics[width=.86\textwidth]{./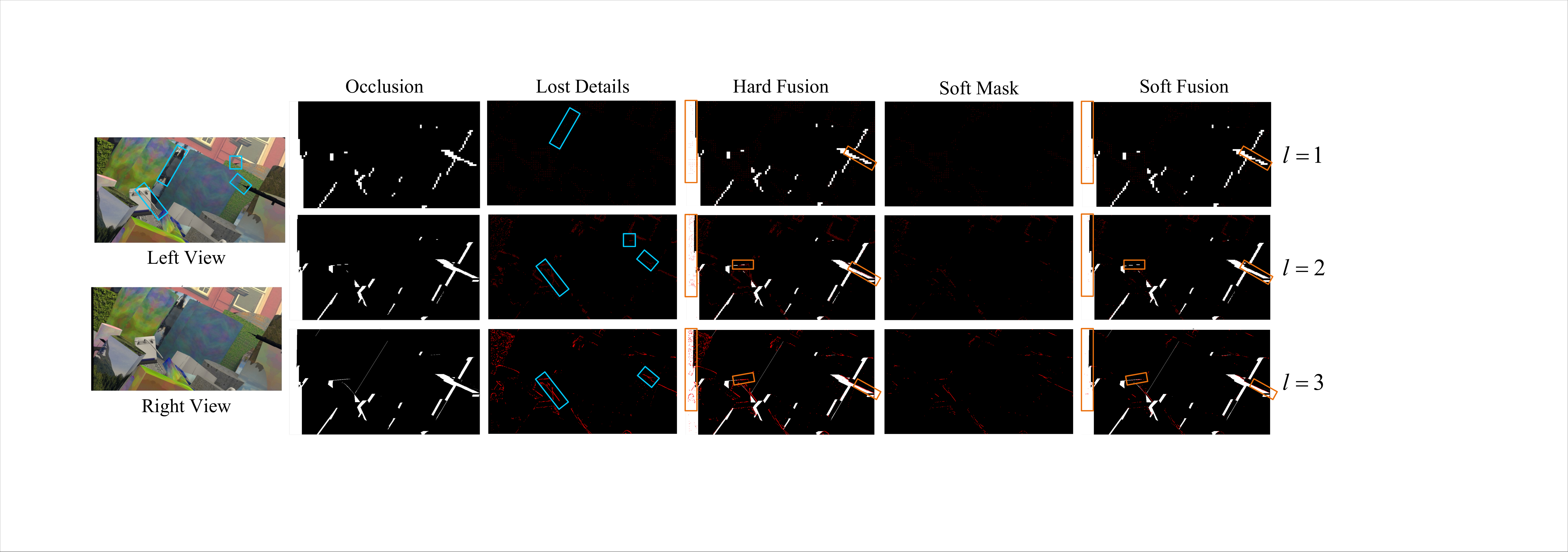}
	\centering
    \caption{The visualization of results from detail loss detection and soft mask. The white areas in the second to last columns represent occlusion. The red points in the third and fourth columns represent the lost details, while the red points in the last two columns represent the learned soft mask.}
    \label{Fig:aba-vis}
\end{figure*}

\section{Experiment}
In this section, we conduct the most analysis of our model based on the Scene Flow dataset \cite{mayer2016large}, except the complexity analysis which is carried out on the high-resolution Middlebury-v3 \cite{scharstein2014high}. We also compare our model with state-of-the-art methods \cite{mayer2016large,kendall2017end,chang2018pyramid,khamis2018stereonet,tonioni2019real,zhang2019ga,yang2019hierarchical,duggal2019deeppruner,badki2020bi3d,xu2020aanet,yang2020waveletstereo} based on Scene Flow \cite{mayer2016large}, KITTI 2015 \cite{menze2015object,Menze2015ISA,Menze2018JPRS}, and Middlebury-v3 \cite{scharstein2014high}.

We train our model end-to-end using Adam optimization with $\beta_1=0.9$, $\beta_2=0.999$ and batch size of $18$ on 3 Nvidia 1080Ti GPUs. We set the maximum of disparity as $216$ and apply color normalization to each input image during training. For the Scene Flow dataset, we train our model for 20 epochs with a learning rate of 0.001 which is then decayed by half every 7 epochs. For KITTI 2015 dataset, we fine-tune the model on mixed KITTI 2012 and KITTI 2015 training sets for 500 epochs. The initial learning rate is set as 0.001 and is decreased by half every 100 epochs after the 200th epoch. As for Middlebury-v3, we fine-tune the model pre-trained on Scene Flow. The learning rate is set to 0.001 for 300 epochs and then changed to 0.0001 for the rest of 600 epochs.

\subsection{Analysis}
\subsubsection{Complexity Analysis}
We analyze the complexity of our model by comparing the growth rate in computation cost with state-of-the-art methods. In order to obtain the curve of growth rate, we resize the image in Middlebury-v3, like the image named Australia, to obtain a sequence of inputs with different resolutions. We then test the running time of each method with the official code. The testing is conducted on a 1080Ti GPU with Cuda synchronization. As shown in Figure \ref{Fig:speed comparison}, most methods stop working on the high-resolution image due to the unaffordable memory consumption, while our method can still run on the image with $5000 \times 3500$ resolution. The time cost of most state-of-the-art methods also grows exponentially when resolution increasing. Different from them, our model has a very low growth rate of time cost benefiting from our decomposition of stereo matching.
\begin{table}[htbp]
  \centering
  \scalebox{.8}[.8]{%
    \begin{tabular}{c||cccc}
    \hline
    Models & EPE   & $\rm > 3px$ & Time (ms) & Mem. (GB) \\
    \hline
    \hline
    GA-Net-15 \cite{zhang2019ga} & 0.84  & -     & $4055^*$ & $6.2^*$ \\
    Bi3D \cite{badki2020bi3d} & 0.73  & -     & $1302^*$ & $10.7^*$ \\
    GCNet \cite{kendall2017end} & 2.51  & 9.34  & 950   & - \\
    PSMNet \cite{chang2018pyramid} & 1.09  & 4.14  & $640^*$  & $6.8^*$ \\
    Waveletstereo \cite{yang2020waveletstereo} & 0.84  & 4.13  & 270   & - \\
    AANet \cite{xu2020aanet} & 0.87  & -     & $147^*$  & $1.2^*$ \\
    \hline
    Deepruner-fast \cite{duggal2019deeppruner} & 0.97  & -     & $87^*$   & $1.9^*$ \\
    DispNetC \cite{mayer2016large} & 1.68  & 9.31  & 60    & - \\
    StereoNet \cite{khamis2018stereonet} & 1.10  & -     & 15    & - \\
    ours  & 0.84  & 3.58  & 50    & 1.6 \\
    \hline
    \end{tabular}%
  }
  \caption{The comparison of algorithms on the Scene Flow dataset. $^*$ represents the result obtained on our machine with official code after Cuda synchronization in a unified setting. $EPE$ is the mean absolute disparity error in pixels. $> 3px$ is the number of pixels whose predicted disparity is deviated from their ground truth by at least 3 pixels.}
  \label{tab: scene flow}%
\end{table}%
Comparing to the high-performing methods, GANet \cite{zhang2019ga}, our model is 100 times faster. Compared to the PSMNet \cite{chang2018pyramid}, our model achieves almost $15\times$ speed increase. As for the DeepPruner \cite{duggal2019deeppruner}, we still achieve almost twice faster running speed and better performance on memory consumption, even current implementation using a fixed-size level of decomposition. According to the theorem \ref{theorem:complexity of our model}, our model could achieve better speed and memory consumption with a dynamic or larger size of decomposition.

\begin{table}[htbp]
  \centering
  \scalebox{.75}[.75]{%
  \renewcommand\tabcolsep{1.0pt} 
    \begin{tabular}{c||ccc|ccc|c}
    \hline
    \multirow{2}[4]{*}{Models} & \multicolumn{3}{c}{Noc} & \multicolumn{3}{c|}{All} & \multirow{2}[4]{*}{Time (ms)} \\
\cline{2-7}          & D1-bg & D1-fg & D1-all & D1-bg & D1-fg & D1-all &  \\
    \hline
    \hline
    GCNet \cite{kendall2017end} & 2.02  & 3.12  & 2.45  & 2.21  & 6.16  & 2.87  & 900 \\
    Bi3D  \cite{badki2020bi3d} & 1.79  & 3.11  & 2.01  & 1.95  & 3.48  & 2.21  & 480 \\
    PSMNet \cite{chang2018pyramid} & 1.71  & 4.31  & 2.14  & 1.86  & 4.62  & 2.32  & 410 \\
    GA-Net-15 \cite{zhang2019ga} & 1.40  & 3.37  & 1.73  & 1.55  & 3.82  & 1.93  & 360 \\
    Waveletstereo \cite{yang2020waveletstereo} & 2.04  & 4.32  & 2.42  & 2.24  & 4.62  & 2.63  & 270 \\
    HSM \cite{yang2019hierarchical} & 1.63  & 3.40  & 1.92  & 1.80  & 3.85  & 2.14  & 140 \\
    HD3S \cite{yin2019hierarchical} & 1.56  & 3.43  & 1.87  & 1.70  & 3.63  & 2.02  & 140 \\
    \hline
    DispNetC \cite{mayer2016large} & 4.11  & 3.72  & 4.05  & 4.32  & 4.41  & 4.34  & 60 \\
    Deepruner-fast \cite{duggal2019deeppruner} & 2.13  & 3.43  & 2.35  & 2.32  & 3.91  & 2.59  & 60 \\
    AANet \cite{xu2020aanet} & 1.80  & 4.93  & 2.32  & 1.99  & 5.39  & 2.55  & 60 \\
    MADNet \cite{tonioni2019real} & 3.45  & 8.41  & 4.27  & 3.75  & 9.2   & 4.66  & 20 \\
    StereoNet \cite{khamis2018stereonet} & -     & -     & -     & 4.3   & 7.45  & 4.83  & 15 \\
    ours  & 1.89  & 3.53  & 2.16  & 2.07  & 3.87  & 2.37  & 50 \\
    \hline
    \end{tabular}%
  }
  \caption{The comparison of algorithms on the KITTI 2015 dataset.  D1 metric measures the percentage of disparity outliers that exceed 3 pixels and $5\%$ of its true value. (Note: the time presented here are not tested in a unified setting. For fair comparison, please refer to the Table \ref{tab: scene flow} and Figure \ref{Fig:speed comparison}).}
  \label{tab: kitti 2015}%
\end{table}%

\begin{figure*}[htbp]
    \centering
    \includegraphics[width=.9\textwidth]{./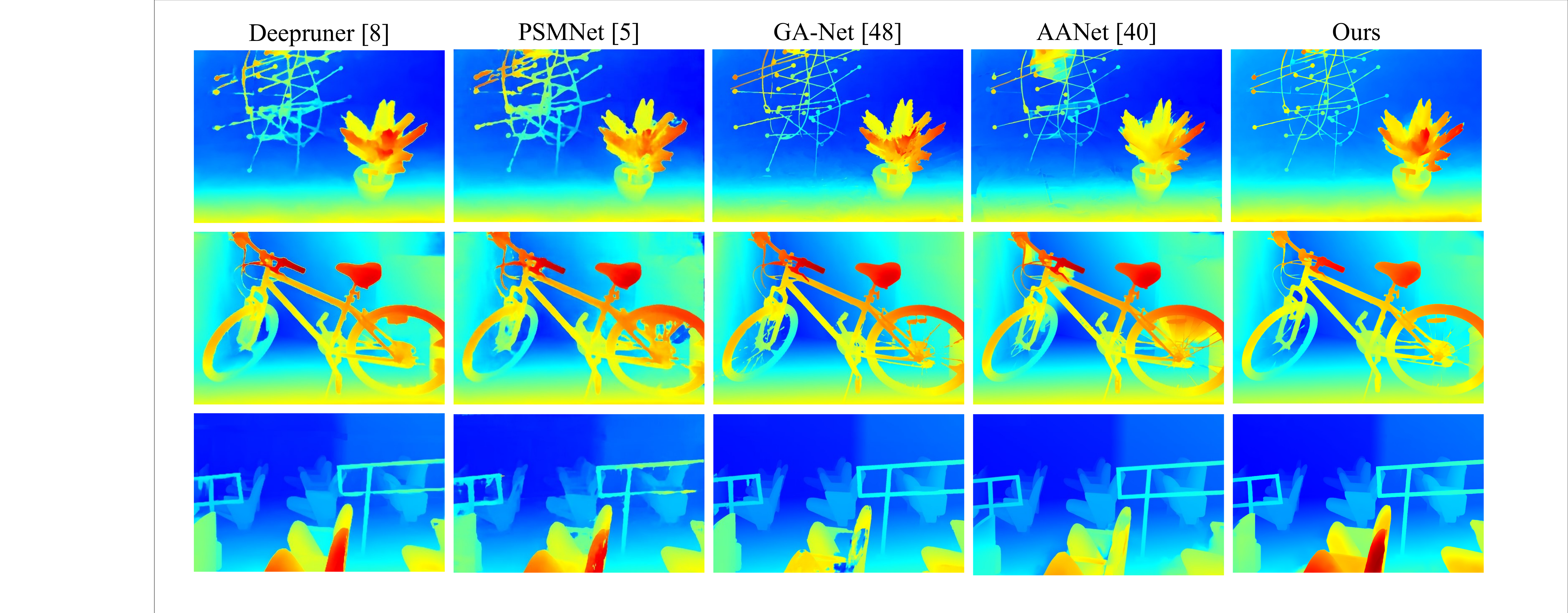}
    \centering
    \caption{The visualization of results on Middlebury-v3.}
    \label{Fig:middlebury}
\end{figure*}

\subsubsection{Ablation Study}
We evaluate our model with different configurations to understand the effectiveness/influence of each component in our model. We use bicubic upsampling to upsample the prediction and compute the end-to-end point (EPE) with ground truth. As illustrated in Table \ref{tab:abalation study - arch}, the error rate is reduced a little after using content-aware upsampling. The error rate is also improved after fusing the sparse results at each level. However, improvement is little. It is because the EPE metric in this table is calculated on the whole spatial space while the sparse result only improves a little area at each level $l$. Thus, we present the differences between upsampled dense result $D_{l}^{'}$ and fusion result $\bar{D}_{l}$ only on the non-occluded fine-grained areas at each level $l$. As shown in Table \ref{tab:abalation study - dense&sparse hard&soft}, the fusion result $\bar{D}_{l}$ is better than the dense result $D_{l}^{'}$, which mirrors the necessity of considering the lost details caused by downsampling. Furthermore, we compare the results from hard fusion and soft fusion to illustrate the necessity of considering the occlusion problem. As shown in Table \ref{tab:abalation study - dense&sparse hard&soft}, the soft fusion is much better than hard fusion at all levels. Furthermore, we analyze the effectiveness of our refinement network. As shown in Table \ref{tab:abalation study - arch}, the performance is significantly improved with the refinement network.

\subsubsection{Visualizing Results of Detail Loss Detection}
The goal of detail loss detection is to find the fine-grained areas at each scale. We visualize the learned binary mask to understand the efficacy of the detector. As shown in Figure \ref{Fig:aba-vis}, the thin or small objects are detected at a corresponding level $l$, especially in the blue boxes of the first and third columns.

\subsubsection{Visualization of Soft Mask}
We compare the visualization of hard fusion, soft mask, and soft fusion to understand the efficacy of soft fusion. As illustrated in Figure \ref{Fig:aba-vis}, most non-robust points, like the occluded points, are eliminated after soft fusion, especially in the orange boxes. We also find that the average value in soft attention is getting smaller when increasing the resolution. We think it is due to the reduced context of sparse matching, which deserves feature exploration, like dense graph net or transformer for better context learning in sparse matching.

\subsection{Benchmark performance}
\textbf{SceneFlow.}
Following previous methods \cite{duggal2019deeppruner,xu2020aanet}, we split the state-of-the-art methods into two parts according to the running time whether exceeds 100ms. It also should be noted that the time and memory cost of some methods are obtained on our 1080Ti GPU with corresponding official code after Cuda synchronization for a fair comparision. As shown in Table \ref{tab: scene flow}, our method achieves comparable results among all the methods. Comparing to the high-performing method, our model reduces the cost of time and memory by order of magnitude while still achieving comparable results. We also provide qualitative results in supplementary materials to show the estimation of our model in different areas, like thin or small objects and large texture-less areas.

\textbf{KITTI 2015.}
In KITTI 2015 dataset, as shown in Table \ref{tab: kitti 2015}, the time cost of reference methods is obtained from the official declaration, which means the different types of GPU and the possibility of Cuda synchronization not being used. As presented in Table \ref{tab: kitti 2015}, our model achieves comparable results and a significant improvement in complexity comparing to the high-performing methods. As for the methods with a running time less than 100ms, we achieve state-of-the-art results. We also provide some visualization in supplementary materials to show the competitive estimation of our model in various scenarios.

\textbf{Middlebury-v3.}
In Middlebury-v3, we estimate the disparities based on the full-resolution inputs when most methods can only afford half resolution. Three examples are visualized in Figure \ref{Fig:middlebury} to show the ability of our model on the high-resolution image. For more results please refer to the supplementary materials.

\section{Conclusion}
In this paper, we have proposed a decomposition model for stereo matching. Compared with many state-of-the-art methods that face the problem of excessive growth in computational cost as the resolution increases, our model can reduce the growth rate by several orders of magnitude. Through a reference implementation of the modules in our model, we achieved state-of-the-art performance among real-time methods, and comparable results among high-performance methods with much lower cost on time and memory. Our model presents a new pipeline for the research of stereo matching, and state-of-the-art methods could be integrated into each step in our model to build a more powerful stereo matching method in the future.

\section{Acknowledgment}
This work was supported by the Natural Science Foundation of China (NSFC) under Grant No.  61773062.

{\small
\bibliographystyle{ieee_fullname}
\bibliography{MyRefs.bib}
}

\clearpage
\appendix
\appendixpage
\addappheadtotoc
\setcounter{table}{0}   
\setcounter{figure}{0}
\setcounter{theorem}{0}

\section{Method Details}
\subsection{Proof of Theorem 1}
\begin{theorem}
Supposing $s \in \{2,3,\cdots\}$ is the size of upsampling ratio between adjacent levels, $1 < C \leq 8/7$ is a constant value and $\mathcal{O}(\cdot)$ represents the tight upper bound, then the complexity $\mathrm{O}$ of exhaustive search process is
\begin{equation}
\begin{aligned}
    \mathrm{O} &= \mathrm{W}_{0}  \mathrm{H}_{0}  \mathrm{D}_{0} \ \mathcal{O}(s^{3L}C).
\end{aligned}
\label{EQ:complexity of exhaustive search}
\end{equation}
\label{theorem:complexity of exhaustive search}
\vspace{-0.5cm}
\end{theorem}

\begin{proof}
As $\mathrm{W}_{l}=\mathrm{W}_{0}s^{l}$ and so does $\mathrm{H}_{l}$ and $\mathrm{D}_{l}$, we get $\mathrm{O}_l=\mathrm{W}_{l} \mathrm{H}_{l} \mathrm{D}_{l} = \mathrm{O}_{0} s^{3l}$.
We then rewrite $\mathrm{O} =\sum_{l=0}^{l=L} \mathrm{O}_{l}$ as  $\mathrm{O} = \sum_{l=0}^{L} \mathrm{O}_{0} s^{3l} = \mathrm{O}_{0} \frac{s^{3(L+1)}-1}{s^{3}-1} < \mathrm{O}_{0} s^{3L} \frac{s^{3}}{s^{3}-1}$. We use $C$ to represent $\frac{s^{3}}{s^{3}-1}$ where $1 < \frac{s^{3}}{s^{3}-1} \le \frac{8}{7}$ because $s$ is at least $2$.
\end{proof}

\begin{figure}[ht]
	\centering
	\includegraphics[width=.44\textwidth]{./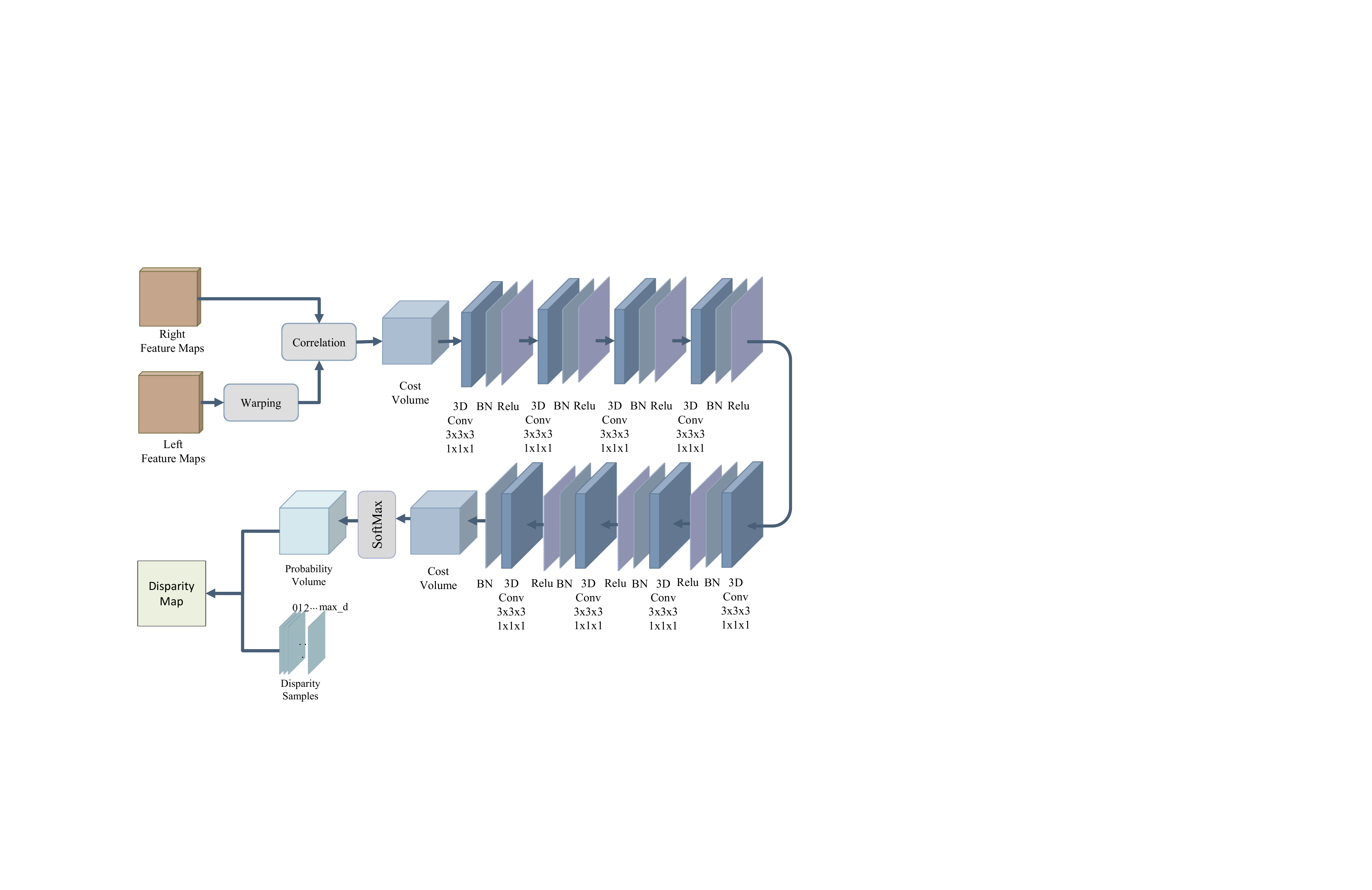}
	\centering
    \caption{Architecture Of dense matching module. $3 \times 3 \times 3$ is the kernel size of 3D convolution, and $1 \times 1 \times 1$ is the stride size.}
    \label{Fig:Dense Matching}
\end{figure}
\subsection{Dense Matching}
As shown in Figure \ref{Fig:Dense Matching}, we build the full cost volume via cross-correlation after warping the right feature maps. We then incorporate eright 3D convolutions to rectify the cost volume. A softmax operation is also used to turn cost volume into a probability volume. The dense disparity map is finally obtained via the regression of probability volume and the sampled disparities.

\begin{table*}[htp]
  \centering
  \scalebox{.85}[.85]{%
  \renewcommand\tabcolsep{2.5pt} 
    \begin{tabular}{c||cccc|cccccc|cccccc}
    \hline
    \multirow{2}[4]{*}{Models} & \multirow{2}[4]{*}{Res} & \multirow{2}[4]{*}{time (s)} & \multirow{2}[4]{*}{time/MP (s)} & \multirow{2}[4]{*}{time/GD (s)} & \multicolumn{6}{c|}{NonOcc}                   & \multicolumn{6}{c}{All} \\
\cline{6-17}          &       &       &       &       & bad 2.0 & bad 4.0 & avgerr & rms   & A90   & A99   & bad 2.0 & bad 4.0 & avgerr & rms   & A90   & A99 \\
    \hline
    \hline
    PSMNet \cite{chang2018pyramid} & Q     & 0.64  & 2.62  & 32.2  & 42.1  & 23.5  & 6.68  & 19.4  & 17.0  & 84.5  & 47.2  & 29.2  & 8.78  & 23.3  & 22.8  & 106 \\
    DeepPruner \cite{duggal2019deeppruner} & \textbf{Q}     & \textbf{0.13}  & 0.41  & 4.38  & 30.1  & 15.9  & 4.80  & 14.7  & 10.4  & 67.7  & 36.4  & 21.9  & 6.56  & 18.0  & 17.9  & 83.7 \\
    GANet \cite{zhang2019ga} & H     & 8.53  & 6.33  & 16.4  & \textbf{18.9} & 11.2  & 12.2  & 35.4  & 40.0  & 84.5  & \textbf{24.9} & \textbf{16.3} & 15.8  & 42.0  & 50.9  & 194 \\
    AANet \cite{xu2020aanet} & H     & 4.56  & 4.17  & 11.0  & 25.2  & 19.6  & 8.88  & 26.2  & 24.2  & 131   & 31.8  & 25.8  & 12.8  & 32.8  & 41.4  & 142 \\
    ours  & \textbf{F}     & \textbf{0.51}  & \textbf{0.10}   & \textbf{0.23}  & \textbf{20.2} & \textbf{11.2} & \textbf{3.72} & \textbf{12.5} & \textbf{10.1} & \textbf{46.8} & \textbf{27.0} & \textbf{17.0} & \textbf{5.37} & \textbf{15.9} & \textbf{15.0} & \textbf{72.2} \\
    \hline
    \end{tabular}%
  }
  \centering
  \caption{The comparison of algorithms on Middlebury-v3 dataset (Q: quadratic resolution, H: half resolution, F: full resolution).}
  \label{tab:middlebury table}%
\end{table*}%

\subsection{Backpropagation of Sparse Matching}
In the main draft, sparse matching is formulated as follows:
\begin{equation}
\begin{aligned}
C_{l}(h, w, d)=<\acute{F}_{l}(h, w), \grave{F}_{l}(h, w-d)>,
\end{aligned}
\label{EQ:cross-correlation}
\end{equation}
\begin{equation}
\begin{aligned}
P_{l}(h, w, d) &=\frac{\mathrm{e}^{C_{l}(h, w, d)-C_{l}^{m a x}(h, w)}}{\sum_{d=0} \mathrm{e}^{C_{l}(h, w, d)-C_{l}^{\max }(h, w)}}, \\
C_{l}^{\max }(h, w) &=\max _{d} C_{l}(h, w, d),
\end{aligned}
\label{EQ:softmax}
\end{equation}
\begin{equation}
\hat{D}_{l}(h, w)=\sum_{d=0} P_{l}(h, w, d) \cdot d.
\label{EQ:regression}
\end{equation}
For the convenience of the derivation of the backpropagation,
we rewrite the above equations as
\begin{small}
\begin{equation}
\hat{D}_{l}(h, w)=\frac{\sum_{d=0} \mathrm{e}^{<} \acute{F}_{l}(h, w), \grave{F}_{l}(h, w-d)>-C_{l}^{\max }(h, w) \cdot d}{\sum_{d=0} \mathrm{e}^{<\acute{F}_{l}(h, w), \grave{F}_{l}(h, w-d)>-C_{i}^{\max }(h, w)}} .
\end{equation}
\end{small}
We then compute the backpropagation over $\acute{F}_{l}(h, w)$ as
\begin{equation}
\begin{aligned}
\frac{\partial \hat{D}_{l}(h, w)}{\partial \hat{F}_{l}(h, w, c)} = \sum_{d=0}(&\hat{F}_{l}(h, w-d, c)(d-\hat{D}_{l}(h, w)\\
&\mathrm{e}^{<\acute{F}_{l}(h, w), \grave{F}_{l}(h, w-d)>-C_{i}^{\max }(h, w)}) \\
/\sum_{d=0} &\mathrm{e}^{<\acute{F}_{l}(h, w), \grave{F}_{l}(h, w-d)>-C_{i}^{\max }(h, w)},
\end{aligned}
\end{equation}
\begin{equation}
\frac{\partial \mathcal{L}}{\partial \dot{F}_{l}(h, w, c)}=\frac{\partial \mathcal{L}}{\partial \hat{D}_{l}(h, w)} \frac{\partial \hat{D}_{l}(h, w)}{\partial \dot{F}_{l}(h, w, c)}.
\end{equation}
As for $\grave{F}_{l}(h, w)$, we compute its backpropagation as
\begin{small}
\begin{equation}
\begin{aligned}
\frac{\partial \hat{D}_{l}(h, w+d^{\prime})}{\partial \grave{F}_{l}(h, w, c)} = &(\acute{F}_{l}(h, w+d^{\prime}, c)(d^{\prime}-\hat{D}_{l}(h, w+d^{\prime})\\
&\mathrm{e}^{<\acute{F}_{l}(h, w+d^{\prime}), \grave{F}_{l}(h, w)>-C_{i}^{\max }(h, w+d^{\prime})}) \\
&/\sum_{d=0} \mathrm{e}^{<\acute{F}_{l}(h, w+d^{\prime}), \grave{F}_{l}(h, w+d^{\prime}-d)>-C_{i}^{\max }(h, w+d^{\prime})}).
\end{aligned}
\end{equation}
\end{small}
\begin{equation}
\frac{\partial \mathcal{L}}{\partial \grave{F}_{l}(h, w, c)}=\sum_{d^{\prime}=0} \frac{\partial \mathcal{L}}{\partial \hat{D}_{l}\left(h, w+d^{\prime}\right)} \frac{\partial \hat{D}_{l}\left(h, w+d^{\prime}\right)}{\partial \grave{F}_{l}(h, w, c)}
\end{equation}

\subsection{Loss}
In addition to the unsupervised loss $\mathcal{L}^{\rm{DLD}}_{l}$ for detail loss detection, we also design a supervised loss for disparity estimation. As there is only ground truth $GT$ of disparity map at the highest level, we downsample the ground truth to each level $GT_l$. At the lowest level, we use smooth L1 between the predicted dense disparity map and the downsampled ground truth:
\begin{equation}
\begin{aligned}
    \mathcal{L}_{0} &= smooth_{L_1}(\mathrm{D}_{0}-GT_{0}), \\
    smoot\mathrm{H}_{L_1}(\epsilon) &= \left\{
                        \begin{array}{lr}
                        0.5\epsilon^2, & if \mid \epsilon \mid < 1 \\
                        \mid \epsilon \mid -0.5, & otherwise
                        \end{array}
                    \right. .
\end{aligned}
\label{EQ:loss at lowest level}
\end{equation}
At higher levels, there are four intermediate results at each level, including the upsampled dense disparity map from previous level ${D}'_{l}$, the sparse disparity map $\hat{D}_{l}$, the fused disparity map $\bar{D}_{l}$ and the refined disparity map $\mathrm{D}_{l}$. To this end, we use a weighted combination of smooth L1 loss over them:
\begin{equation}
\begin{aligned}
    \mathcal{L}_{l} = &\ \  \gamma_1*smooth_{L_1}(\mathrm{D}_{l}-GT_{l}) \\
                        &+\gamma_2*smooth_{L_1}(\bar{D}_{l}-GT_{l}) \\
                        &+\gamma_3*smooth_{L_1}(\hat{D}_{l}-GT_{l} \  M_{\acute{FA}_{l}}) \\
                        &+\gamma_4 *smooth_{L_1}({D}'_{l}-GT_{l}).
\end{aligned}
\label{EQ:loss at rest level}
\end{equation}

Finally, we train our model using end-to-end learning with following loss function:
\begin{equation}
\begin{aligned}
    \mathcal{L} = \mathcal{L}_{0} \cdot \mathrm{W}_{0} + \sum^{l=L}_{l=1}(\mathcal{L}_{l} \cdot \mathrm{W}_{l} + \mathcal{L}^{\rm{DLD}}_{l} w'_{l}),
\end{aligned}
\label{EQ:loss at rest level}
\end{equation}
where $\mathrm{W}_l$ and $w'_l$ are the loss weight.

\section{More Details on Experiment}
We set $\gamma_{1}=0.5$, $\gamma_{2}=0.2$, $\gamma_{3}=0.2$, $\gamma_{4}=0.1$, and $w_{0}=0.037$, $w_{1}=0.11$, $w_{2}=0.33$, $w_{3}=1$, $w_{1}^{\prime}=0.01$.

\subsection{Middlebury-v3}
We present the comparison of results on the Middleburyv3 dataset \cite{scharstein2014high}. We first give a brief description of the metric. time/MP: time normalized by the number of pixels (sec/megapixels). time/GD: time normalized by the number of disparity hypotheses (sec/(gigapixels*ndisp)). bad xx: percentage of ”bad” pixels whose error is greater than xx. avgerr: average absolute error in pixels. rms: root mean-square disparity error in pixels. Axx: xx-percent error quantile in pixels. As shown in Table \ref{tab:middlebury table}, our model achieves the best speed on time/MP and time/GD. our model also obtains almost the best results on most metrics about accuracy.

\subsection{KITTI 2015}
Despite the comparison with state-of-the-art methods in the main draft, we also give a visualization of our results on the KITTI 2015 dataset \cite{menze2015object,Menze2015ISA,Menze2018JPRS}. As shown in Figure \ref{Fig:vis-kitti}, our model achieves competitive estimations in various scenarios.
\begin{figure*}[htbp]
    \centering
    \includegraphics[width=.99\textwidth]{./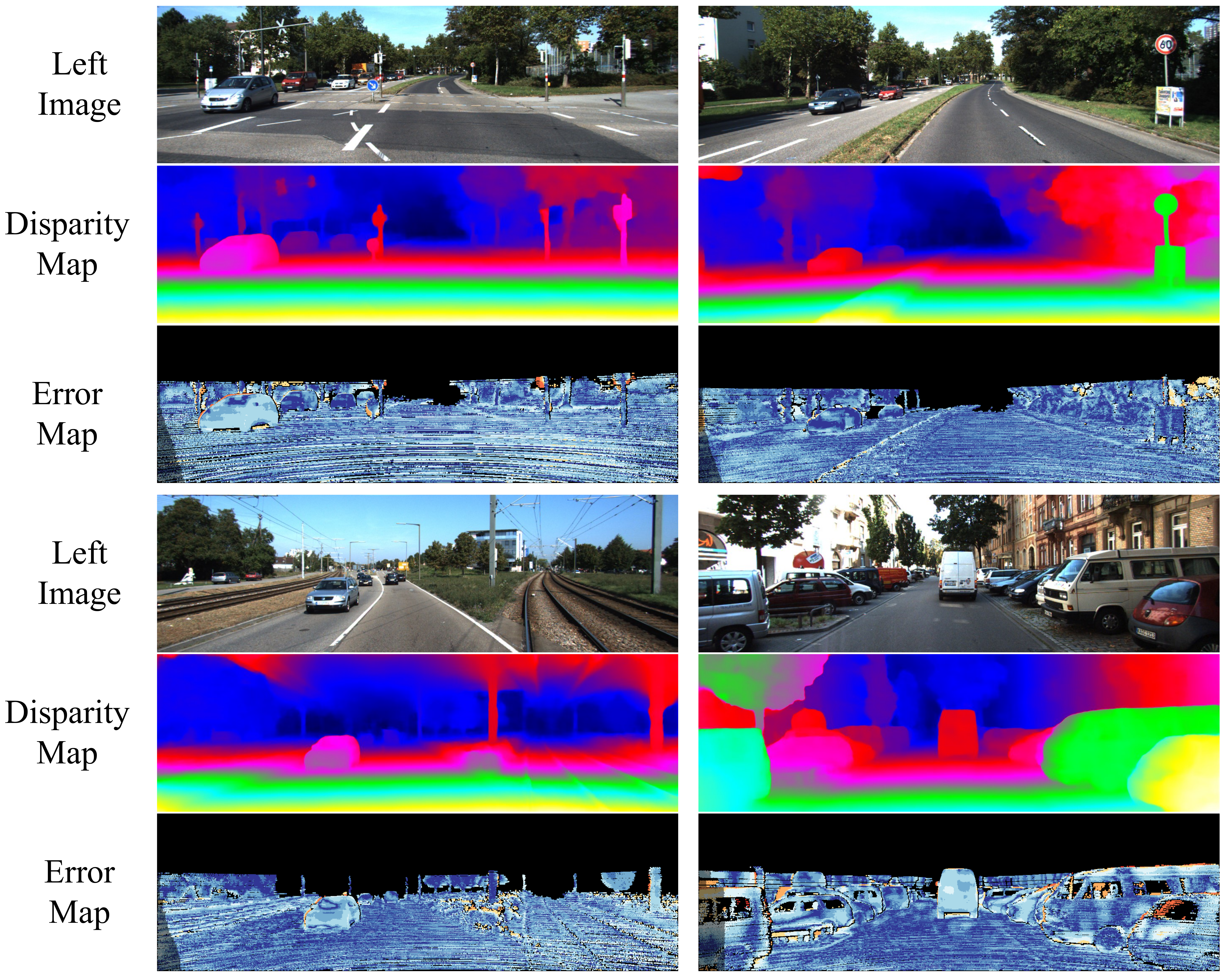}
    \centering
    \caption{Visualization of results on KITTI2015 dataset.}
    \label{Fig:vis-kitti}
\end{figure*}

\subsection{SceneFlow}
We give a visualization of our results on the Scene Flow dataset \cite{mayer2016large}. As shown in Figure \ref{Fig:vis-sceneflow}, our model achieves great results in different areas, like thin or small objects and large texture-less areas.
\begin{figure*}[htbp]
    \centering
    \includegraphics[width=.99\textwidth]{./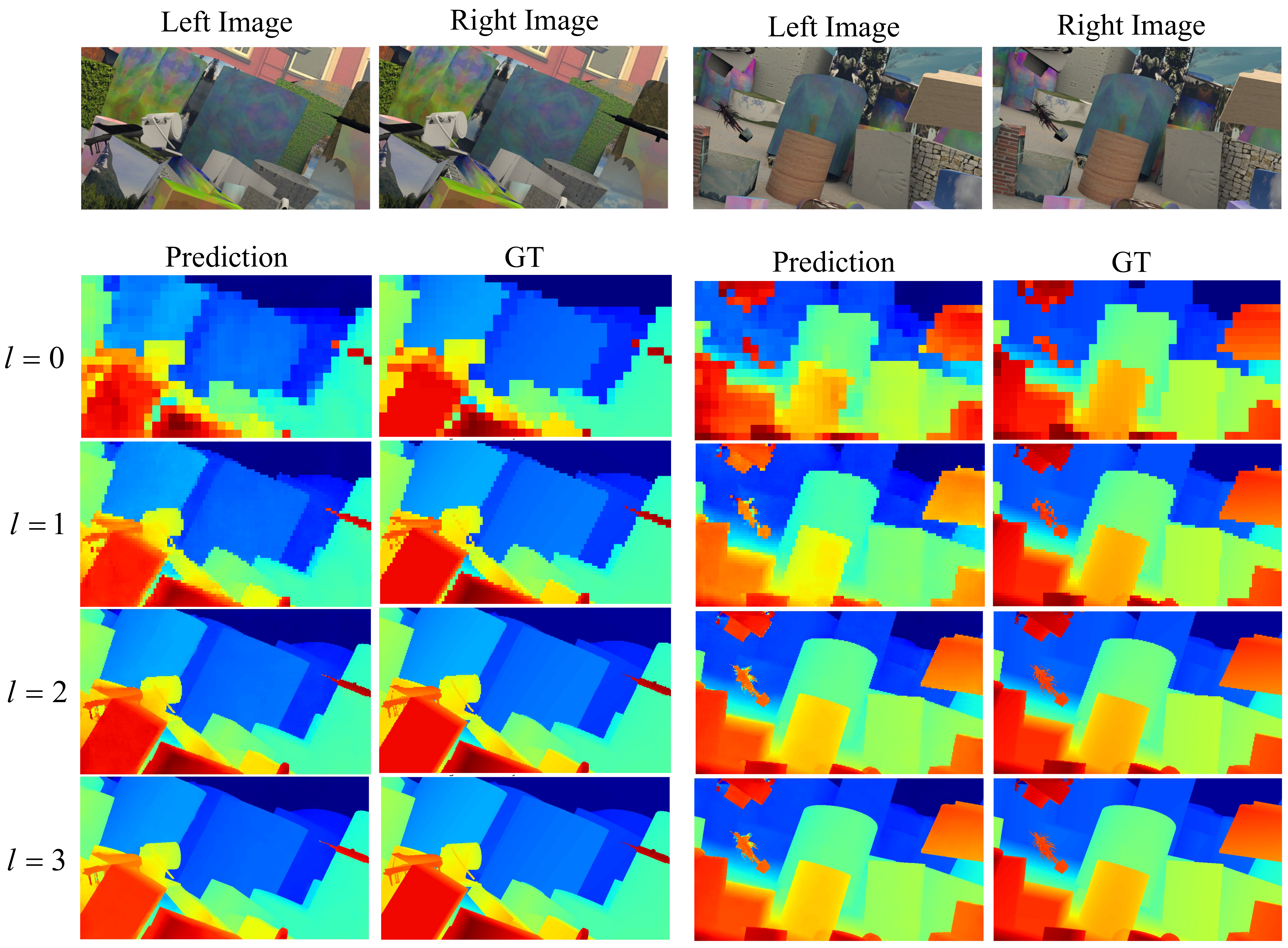}
    \centering
    \caption{Visualization of results on Scene Flow dataset.}
    \label{Fig:vis-sceneflow}
\end{figure*}

\end{document}